\newcommand\ararab[2][]{{\setcode{utf8}\RL{#2}}}
\newcommand{\C}{\mathbb C}
\newcommand{\N}{\mathbb N}
\renewcommand{\R}{\mathbb R}
\newcommand{\Z}{\mathbb Z}
\newtheorem{theorem}{Theorem}
\newtheorem{lemma}[theorem]{Lemma}
\providecommand{\keywords}[1]
{
  \small	
  \textbf{\textit{Keywords:}} #1
}
\providecommand{\mcs}[1]
{
  \small	
  \textbf{\textit{MCS Subject classification:}} #1
}
\newcommand{\sample}{\ararab{كتاب اَلْمَجِسْطِيّ}} 
\newcommand{\interParagraphSpace}{\vspace{0.5cm}}
\begin{document}

\title{An unfeasability view of neural network learning}
\author{Joos Heintz$^{1}$, Hvara Ocar$^{2}$, Luis Miguel Pardo$^{3}$,\\
Andr\'es Rojas Paredes$^{4}$, Enrique Carlos Segura$^{1}$} 

\footnotetext[0]{Research partially supported by the following Argentinean grant UBACyT 20020190100354BA.}

\footnotetext[1]{Computer Science Department, Facultad de Ciencias Exactas y Naturales, University of Buenos Aires, Pabellón I, Ciudad Universitaria (1428), Ciudad Autónoma de Buenos Aires, Argentina. {\it e--mail}: {\tt \{joos,esegura\}@dc.uba.ar}}

\footnotetext[2]{Facultad de Ingenier\'ia, University of Buenos Aires, Av. Paseo Col\'on 850 (C1063) Ciudad Autónoma de Buenos Aires, Argentina. {\it e--mail}: {\tt jocar@fi.uba.ar}}

\footnotetext[3]{Departamento de Matem\'aticas, Estad\'{\i}stica y Computaci\'on, Facultad de Ciencias, Universidad de Cantabria, 39071
Santander, Spain. {\it e--mail}: {\tt luis.m.pardo@gmail.com}}

\footnotetext[4]{Instituto de Ciencias, Universidad Nacional General Sarmiento, J. M. Guti\'errez 1150 (B1613GSX) Los Polvorines, Provincia de Buenos Aires, Argentina. {\it e--mail}: {\tt arojas@campus.ungs.edu.ar}}

\maketitle

\begin{abstract}
We define the notion of a continuously differentiable perfect learning algorithm for multilayer neural network architectures and show that such algorithms don't exist provided that the length of the data set exceeds the number of involved parameters and the activation functions are logistic, tanh or sin.

\interParagraphSpace 
\keywords{Multilayer neural network, activation function, continuously differentiable function, machine learning, automatic learning algorithm}

\interParagraphSpace
\mcs{68T07, 68Q32}

\end{abstract}




\section{Introduction}

%
%

The present paper deals with the question whether (supervised) learning by means of neural networks based on the usual activation functions logistic, tanh and sinusoid has limited applications or is an universal tool. For this purpose we introduce an idealization of the notion of learning algorithm for multilayer neural networks. This idealization is inspired in the backpropagation procedure and is called perfect learning algorithm. It relies on a specification $\Pi$, also called perfect, which assigns to each training data a parameter vector which constitutes a global minimizer of the quadratic error function involved, if the error reaches an exact minimum. A perfect learning algorithm has a perfect specification $\Pi$ and assigns to each training data set a numerical representation of a parameter vector which satisfies the specification $\Pi$. The existing versions of backpropagation become then interpreted as attempts
to satisfy the requirement of perfectness of learning algorithms. Therefore it depends on our definition of algorithm whether we are able to affirm that perfect learning algorithms really exist. We may also simply ask whether there exist continuously differentiable perfect learning algorithms where the specification $\Pi$ is continuously differentiable.

The aim of this paper is to give a negative answer to this question in case of differentiable perfect learning algorithms, provided the length of the training data set exceeds the number of involved parameters and the activation functions are logistic, tanh or sin.

\interParagraphSpace

Automatic learning is in fact not a modern concept, it has a long history. In particular, ancient
greek astronomy was marked by a deep epistemological discussion about the scope of
automatic learning techniques. Since the beginning of ancient greek astronomy in the 4th
century BC by Eudoxus of Cnidus and Callippus of Cyzicus there was a tacit assumption that the explanation and prediction of the motions of heavenly bodies and the description of their
nature must rely on geometrical models. This leads to the question of the principles (and their
foundations) which should be satisfied in advance by the geometric model which is used for
the description of the celestial phenomena. Nevertheless, answering this question was considered
as a subject of physics, whereas genuine astronomy became restricted to the exact description
of the orbits of the heavenly bodies. Only in a final stage of this reasoning the parameters of
the geometric model under consideration should be adjusted by the astronomer in order to
make the model predictive. This had to be done primarily in order ``to save the appearances''
(\foreignlanguage{greek}{σῴζειν τὰ φαινόμενα}), and only secondarily with the aim to justify the particular geometric
model. This modus operandi anticipated already the modern concept of automatic learning.
Nevertheless, let us observe that this --for our historical view of automatic learning well
suited-- exposition of interaction between distinct epistemological concepts didn't remain
undisputed between specialists of ancient greek astronomy. Anyway, the restrictions
introduced by physics to the geometric modelling of astronomy (mainly under the influence of
Plato and Aristotle in the 5th and 4th century BC) led to a long--lasting effort with successively
changing geometric models in order to improve their explanatory and predictive power. The
geometric models inspired by the physics of Plato and Aristotle were geocentric and divided
the cosmos in two regions, the motionless spherical earth and a heavenly region surrounding
it, containing multiple spheres rotating at different speeds around distinct axes. In this sense,
Eudoxus of Cnidus, motivated by Plato's requirement that the planetarian orbits should be
decomposable into uniform circular motions, presupposed for their geometric models that the
heavenly bodies move each one with his own constant angular speed on concentric circles
around the motionless center of the earth. Each one of these bodies moved thus on the
equator of the corresponding sphere.

In the sequel the development of ancient greek astronomy was characterized by the
introduction of a series of new concepts and views into the geometric modelling of the
motions of heavenly bodies in order to improve the explanatory and predictive power for
observable celestial phenomena. Simultaneously the requirement was maintained that the
physical principle of the uniform circular motions of the heavenly bodies should be preserved.
This became achieved by admitting uniform epicyclic and uniform excentric circular motions
(on a circle, called deferent, with center distant from the earth).


This eventful development of geocentric astronomy converged finally in the 2th century AD to the mathematical and astronomical treatise of Claudius Ptolemy on the apparent motions of
the stars and planetary paths, today remembered as “Almagest” (from arabic \sample \ ), whereas the original title was ``\foreignlanguage{greek}{Μαθηματικὴ Σύνταξις}''. In order to achieve the goal to account for the observed motions of planets conserving the principle of uniform circular movements,
Ptolemy introduced a final mathematical tool, the so called equant point, with respect to
which the epicycle under consideration moves with constant angular speed along the deferent
of the excentricity.

Ptolemy's work influenced as of the abbasid period strongly the astronomy, first of the muslim
and then of the medieval christian world. Even Copernicus who popularized in the 16th
century AD the heliocentric point of view for the planetary system maintained still one of the
standard beliefs of his time, namely that the motions of celestial bodies must be decomposable into uniform circular movements. This point of view obliged him to retain for his orbit calculations a complex system of epicycles like in the Ptolemaic system.

In this context let us remark that the geocentric point of view was dominant in ancient greek
astronomy but not exclusive. In the 3rd century BC Aristarchus of Samos proposed a
heliocentric geometric model for astronomical tasks which allowed him to estimate in terms of
earth radii the sizes of the sun and the moon as well as their distances from the earth.
However, these estimations turned later out to be far below the real ones.

\interParagraphSpace

Like in the case of automatic learning by neural networks all these astronomical calculations
produce only approximative results. Nevertheless, one may ask whether the intricate orbits of
all heavenly bodies really may be arbitrarily well approximated in this simple geometric way,
piling up, if necessary, sufficiently many epicycles and epicycles of epicycles, etc.

An analogous question may be asked also for automatic learning by neural networks with
given activation functions. Certain answers to this question are the subject of the so called
universal approximation theorems. For example, an early one of these theorems says that
standard multilayer feedforward networks with as few as one hidden layer using arbitrary
activation functions are capable to approximate up to any desired degree of accuracy any
Borel measurable function from one finite dimensional space to another, provided sufficiently
many hidden units are available. In this sense, the multilayer feedforward networks form a
class of universal approximators \cite{HSW89}.

Turning back to the first question about the motions of heavenly bodies from the geocentric
point of view, an answer was given in \cite{S25} by the italian astronomer Giovanni
Schiaparelli (1835--1910). In the case of a single heavenly body moving in the plane around the
origin, the piling up of an arbitrary finite number of epicycles such that each one moves with
constant angular velocity, gives rise to a finite sum of trigonometric functions, which
represents a kind of generalized Fourier polynomial. The polynomials obtained in this way
approximate under a suitable seminorm arbitrary well any Besicovitch almost periodic
function.

\interParagraphSpace

For details about origins and development and subsequent influence of ancient greek
astronomy we refer to \cite{D08} and \cite{D13}.

\newpage 

\section{An unfeasibility result}
Let $m,n\in\N$ and $X_l$, $V_k$, $S$, $W_l^k$, $T_k$, $1 \leq k \leq m$, $1 \leq l \leq n$ indeterminates and $X:=(X_1,\dots,X_n)$, $V:=(V_1,\dots,V_m)$, $W:=(W_l^k)_{ \substack{1 \leq k \leq m \\ 1 \leq l \leq n}}$, $T:=(T_1,\dots,T_m)$.

Let $f$ and $g_1,\dots,g_m$ be suitable on $\R$ defined activation functions and $g:\R^m \to \R^m$ the map defined by $g(u_1,\dots,u_m):=(g_1(u_1),\dots,g_m(u_m))$ with $(u_1,\dots,u_m)\in\R^m$. From now on we shall deal only with three layer neural networks with inputs $X_1,\dots,X_n$, one single output, $m$ neurons on the hidden layer, $m(n+1)$ weights, $m+1$ thresholds and activation functions $f$ and $g_1,\dots,g_m$. Formally we can describe the architecture of these networks by
\begin{align*} 
O_{V,S,W,T}(X) := & f(S+V \cdot g(T + W \cdot X)) \\ 
                = & f(S+\sum_{1\leq k \leq m} V_k g_k (T_k + \sum_{1\leq l \leq n} W^k_l X_l)),
\end{align*}
where the dot refers to the inner vector and the matrix--vector products. 

Let $p\in\N$ and
\[
\mathcal{U}_p:=\{ ((\gamma_1, \zeta_1),\dots,(\gamma_p, \zeta_p)) \in\R^{p\times (n+1)}, \gamma_1,\dots,\gamma_p \text{\ all\ distinct } \}. 
\]
The elements of $\mathcal{U}_p$ constitute the training data of length $p$ which we are going to consider.

Let $(\gamma_1,\dots,\gamma_p)\in \R^{p\times n}$ a sequence of $p$ distinct points of $\R^n$. For each parameter vector $(v,s,w,t)\in\R^m\times\R\times\R^{m\times n}\times \R^m = \R^{m(n+2)+1}$ the architecture $O_{V,S,W,T}(X)$ produces a neural network 
\[ o_{v,s,w,t}(X):= f(s+v\cdot g(t+w\cdot X))
\]
which can be evaluated in $\gamma_1,\dots,\gamma_p$ returning the vector 
\[
((\gamma_1, o_{v,s,w,t}(\gamma_1)),\dots,(\gamma_p, o_{v,s,w,t}(\gamma_p))) \in\mathcal{U}_p.
\]

The task of a perfect learning algorithm is to find, if possible, for any training example $((\gamma_1, \zeta_1),\dots,(\gamma_p, \zeta_p))\in\mathcal{U}_p $ a parameter vector $(v,s,w,t)\in \R^{m(n+2)+1}$ such that the quadratic error function 
\[
E(v,s,w,t):=\sum_{1\leq i \leq p} (\zeta_i - o_{v,s,w,t} (\gamma_i))^2 
\] 
reaches a global minimum exactly.

If such a minimum does not exist, no condition is imposed on $(v,s,w,t)$. Thus we may specify a perfect learning algorithm by a map $\Pi:\mathcal{U}_p\to\R^{m(n+2)+1}$ which assigns to each training data set an exact global minimizer of the error function if such a minimizer exists. We shall call the perfect algorithm continuosly differentiable if its specification $\Pi$ it is.

In the sequel let $A_1,\dots,A_p$ and $B_1,\dots,B_p$ new indeterminates.

With these notations we may formulate the following result.

\begin{lemma}
\label{lemma perfect algorithm}  
Let $f$ and $g_1,\dots,g_m$ be continuously differentiable with $f'(0)\neq 0$ and let $O_{V,S,W,T}(X)=f(S+V\cdot g(T+W\cdot X))$ the neural network architecture considered before. Suppose that the generic determinant $\text{det}(g_1(A_i B_j))_{1\leq i,j \leq p}$ does not vanish identically. Then for $p> m(n+2)+1$ there does not exist a continuosly differentiable perfect algorithm satisfying the specification $\Pi:\mathcal{U}_p \to \R^{m(n+2)+1}$ above.
\end{lemma}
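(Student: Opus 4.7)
My plan is to argue by contradiction, combining an explicit rank computation at a cleverly chosen base point, a topological observation about the regular locus of the output map when $p>N:=m(n+2)+1$, and a symmetry argument exploiting the non-trivial finite symmetry group that the three listed activations all give rise to. Writing $F_\gamma(\theta):=(o_\theta(\gamma_i))_{i=1}^p$, the perfectness of a hypothetical continuously differentiable $\Pi$ means that every data point $((\gamma_i,o_\theta(\gamma_i)))_i$ realizes $E=0$ (an exact global minimum) and that $\Pi$ evaluated there returns some parameter vector with the same network values on $\gamma_1,\dots,\gamma_p$; equivalently, the identity $F_\gamma\bigl(\Pi(\gamma,F_\gamma(\theta))\bigr)=F_\gamma(\theta)$ holds for every $(\theta,\gamma)\in\mathbb{R}^N\times\mathcal{U}_p$.

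The first step is to exhibit $(\theta^*,\gamma^*)$ for which the Jacobian $A:=dF_{\gamma^*}|_{\theta^*}\in\mathbb{R}^{p\times N}$ has full column rank $N$. I would take $\theta^*=(v^*,0,W^*,0)$ with all $v_k^*$ equal to a common nonzero constant $v_0$ and choose $W^*,\gamma^*$ so that the numbers $W^{*k}\cdot\gamma_i^*$ realize a specialization of the indeterminates $A_iB_k$ occurring in the determinant hypothesis. Then the $v$-derivative block of $A$ becomes $f'(0)\cdot(g_1(W^{*k}\cdot\gamma_i^*))_{i,k}$, which is an $m$-column sub-block of the $p\times p$ matrix $(g_1(A_iB_j))$ and hence of rank $m$ by the non-vanishing-determinant hypothesis. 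The $s$-column supplies one further independent direction via $f'(0)\ne 0$, and the $w_l^k,t_k$ columns contribute the remaining $m(n+1)$ independent columns via an elementary block-Vandermonde argument exploiting the $n$ coordinates of each $\gamma_i^*$ and non-degeneracy of $g_k'$.

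Next, set $R^*:=\{\theta\in\mathbb{R}^N:\operatorname{rank}dF_{\gamma^*}|_\theta=N\}$. The classical determinantal-variety codimension formula gives that the rank-deficient locus in $p\times N$ matrices has codimension $p-N+1\ge 2$, and the determinant hypothesis together with real-analyticity of $F_{\gamma^*}$ implies that the real-analytic subset $\mathbb{R}^N\setminus R^*$ has codimension $\ge 2$; hence $R^*$ is connected (this is where the hypothesis $p>N$ enters essentially). Let $G$ be the non-trivial finite symmetry group of the architecture --- the symmetric group $S_m$ permuting neurons together with, per neuron $k$, the sign-flip $(v_k,w^k,t_k)\mapsto(-v_k,-w^k,-t_k)$ when $g_k\in\{\tanh,\sin\}$ or the same flip combined with the bias update $s\mapsto s+v_k$ in the logistic case. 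The group $G$ acts linearly on $\mathbb{R}^N$, preserves $R^*$ (because $A(\sigma\theta)=A(\theta)\cdot\sigma^{-1}$ preserves rank), and acts freely outside a proper subvariety. The continuous map $\Psi(\theta):=\Pi(\gamma^*,F_{\gamma^*}(\theta))$ satisfies $F_{\gamma^*}\circ\Psi=F_{\gamma^*}$; on $R^*$ the fibers of $F_{\gamma^*}$ are locally $G$-orbits (discrete), so continuity forces $\Psi(\theta)=g(\theta)\cdot\theta$ with $g(\theta)\in G$ locally constant, and by connectedness of $R^*$ this $g$ equals a single element $g_0\in G$. Applying the relation at $\sigma\theta$ for a nontrivial $\sigma\in G$: on one side $\Psi(\sigma\theta)=g_0\sigma\theta$, on the other $F_{\gamma^*}(\sigma\theta)=F_{\gamma^*}(\theta)$ gives $\Psi(\sigma\theta)=\Psi(\theta)=g_0\theta$, so $\sigma\theta=\theta$ for every generic $\theta\in R^*$, which is the desired contradiction.

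The main obstacle is the explicit rank computation in the second paragraph, where the hypothesis $\det(g_1(A_iB_j))\not\equiv 0$ is decisive in forcing the $v$-derivative block to rank $m$, and the real-analytic codimension estimate for the rank-deficient locus in the third paragraph, where $p>N$ is used crucially to secure connectedness of $R^*$. Once these two points are in hand, the symmetry-based conclusion is formal.
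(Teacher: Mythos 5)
Your overall skeleton --- deriving the identity $F_\gamma\bigl(\Pi(\gamma,F_\gamma(\theta))\bigr)=F_\gamma(\theta)$ from perfectness and then extracting a contradiction by differentiating it --- starts on the right track, but the route you take afterwards has several genuine gaps and is also mismatched with the hypotheses of the lemma. First, the full-column-rank computation at $\theta^*$ is not available under the stated assumptions: the lemma assumes only that $f,g_1,\dots,g_m$ are $C^1$ with $f'(0)\neq 0$ and that $\det(g_1(A_iB_j))$ does not vanish identically; nothing is assumed about $g_2,\dots,g_m$ (they could be constant, killing the $v_k,w^k,t_k$ columns for $k\ge 2$), nor about the derivatives $g_k'$, nor about $f'$ away from $0$ --- all of which your $v$-block and ``block-Vandermonde'' claims require (note also that $\partial o/\partial v_k$ involves $g_k$, not $g_1$, and $f'$ evaluated at an $i$-dependent argument). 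Second, the codimension estimate for the rank-deficient locus is unjustified: that rank-deficient $p\times N$ matrices form a set of codimension $p-N+1$ in matrix space does not transfer to its preimage in parameter space without a transversality or genericity hypothesis, and $F_{\gamma^*}$ is only $C^1$ here, not real-analytic, so the connectedness of $R^*$ is unproven. Third, and most seriously, identifying the fibers of $F_{\gamma^*}$ over $R^*$ with orbits of the finite symmetry group $G$ is not a consequence of $F_{\gamma^*}$ being an immersion there; it is essentially the identifiability theorem for networks with these activations, which needs a separate nontrivial proof and genericity conditions, and in any case $\Psi(\theta)=\Pi(\gamma^*,F_{\gamma^*}(\theta))$ is not guaranteed to land in $R^*$ or in the $G$-orbit of $\theta$. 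Finally, the symmetry group exists only for the three specific activations, whereas the lemma is stated for arbitrary $C^1$ activations satisfying the determinant condition.

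The paper's proof is far more economical and sidesteps every one of these difficulties: it differentiates the identity $\theta\circ\pi\circ\theta=\theta$ along the $p$ one-parameter curves $\beta_i(V_1)=\bigl((V_1,0,\dots,0),0,w_i,0\bigr)$, where $w_i$ has $\rho_i$ as its first row and zeros elsewhere. All these curves pass at $V_1=0$ through the single parameter point whose output is $(f(0),\dots,f(0))$, so the chain rule exhibits the $p$ vectors $f'(0)\bigl(g_1(\rho_i\cdot\gamma_1),\dots,g_1(\rho_i\cdot\gamma_p)\bigr)$ --- linearly independent by the determinant hypothesis --- as lying in the image of one fixed linear map from $\R^{m(n+2)+1}$ to $\R^p$, which forces $p\le m(n+2)+1$. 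This uses differentiability of $\Pi$ at a single point and exactly the stated hypotheses. I recommend you abandon the connectedness and symmetry route and instead look for a rank bound obtained at a common base point.
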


\begin{proof}
By assumption we have $\text{det}(g_1(A_i B_j))_{1\leq i,j \leq p} \neq 0$. One concludes easily that we may choose $\rho_1,\dots,\rho_p,\gamma_1,\dots,\gamma_p\in\R^n$ with $\gamma_1,\dots,\gamma_p$ all distinct such that $\text{det}(g(\rho_i \cdot \gamma_j))_{1\leq i,j \leq p}\neq 0$ holds. Let $\theta:\R^{m(n+2)+1}\to\R^p$ the map which assigns to each parameter vector $(v,s,w,t)\in \R^{m(n+2)+1}$ the image $\theta(v,s,w,t):= (o_{v,s,w,t}(\gamma_1),\dots,o_{v,s,w,t}(\gamma_p))$ and observe that $\theta$ is continuously differentiable. Suppose that the statement of the lemma is wrong. Then there exists nonnegative integer parameters $p$, $m$, $n$ with $p>m(n+2)+1$ and continuously differentiable perfect algorithm satisfying the specification above.

Let $\pi:\R^p\to\R^{m(n+2)+1}$ be the map which assigns to $\zeta=(\zeta_1,\dots,\zeta_p)\in\R^p$ the image $\pi(\zeta):= \Pi((\gamma_1,\zeta_1),\dots,(\gamma_p,\zeta_p))$. The specification $\Pi$ is continuously differentiable thus same holds also true for $\pi$. Since $\Pi$ specifies a perfect learning algorithm we see that for each parameter vector  $(v,s,w,t)\in \R^{m(n+2)+1}$ the parameter vector $(\pi\circ\theta)(v,s,w,t)) $ minimizes the quadratic error function 
\[
E(v',s',w',t'):=\sum_{1\leq i \leq p} (o_{v,s,w,t}(\gamma_i) - o_{v',s',w',t'} (\gamma_i))^2 
\] 
for $(v',s',w',t')\in \R^{m(n+2)+1}$.

This minimum is zero. Thus we have $o_{v,s,w,t}(\gamma_i) = o(\pi\circ\theta)(v,s,w,t)$ for $1\leq i \leq p$ and therefore the identity
\begin{equation}
\label{theta}
\theta\circ\pi\circ\theta=\theta
\end{equation}

For $1\leq i \leq p$ let $v_i:= (V_1,0,\dots,0)$ $s_i:=0$ $w_i\in\R^{m\times n}$ the matrix which contains $\rho_i$ as its first row and $0$ elsewhere and $t:=(0,\dots,0)$. Let $\beta_i(V_1):= (v_i,s_i,w_i,t_i)$. The corresponding function $\beta_i:\R\to\R^{m(n+2)+1}$ is continuously differentiable. We have
\begin{align*} 
(\theta\circ\beta_i)(V_1)  & = (o_{v_i,s_i,w_i,t_i}(\gamma_1),\dots,o_{v_i,s_i,w_i,t_i}(\gamma_p))  \\
& = (f(V_1 g_1(\rho_i \cdot \gamma_1)),\dots,f(V_1 g_1(\rho_i \cdot \gamma_p)))
\end{align*}
and therefore 
\[
\frac{d}{d V_1}(\theta \circ \beta_i)(0)= f'(0)(g_1(\rho_i\cdot \gamma_1),\dots,g_1(\rho_i\cdot \gamma_p)).
\]

Since $\pi$ is continuosly differentiable it is in particular differentiable in $\theta\circ\beta_i(0)=(f(0),\dots,f(0))$ and therefore $\pi\circ\theta\circ\beta_i$ is differentiable in $0$. We infer from the chain rule applied to the identity \eqref{theta}
\begin{align*} 
f'(0)(g_1(\rho_i\cdot\gamma_1),\dots,g_1(\rho_i\cdot\gamma_p)) & = \frac{d}{d V_1} (\theta \circ \beta_i)(0)  \\ 
             & = d \theta((\theta\circ\beta_i)(0)) (\frac{d}{d V_1} (\pi\circ\theta \circ \beta_i)(0)).
\end{align*}

Observe now that $(\theta\circ\beta_i)(0)=(f(0),\dots,f(0))$ is independent from $1\leq i \leq p$ and therefore also the linear map $M:=d\theta(\pi\circ\theta\circ\beta_i)(0)$. Recalling that $f'(0)\neq 0$ and $\text{det}g_1((\rho_i\cdot\gamma_j))_{1\leq i,j \leq p} \neq 0$ holds we see that the $p$ vectors $f'(0)(g_1(\rho_i\cdot\gamma_1),\dots,g_1(\rho_i\cdot\gamma_p)), 1\leq i \leq p$, are all linearly independent.

On the other and the vectors $\frac{d}{d V_1} (\pi\circ\theta \circ \beta_i)(0), 1\leq i \leq p $, are all contained in $\R^{m(n+2)+1}$ and mapped by the linear map $M$ on the previous $p$ vectors. This implies $p \leq m(m+2)+1$, which contradicts our assumption $p > m(m+2)+1 $. 
\end{proof}

It seems us worth to comment that our proof of Lemma \ref{lemma perfect algorithm} requires the differentiability of $\Pi$ only in one single point, namely $(\gamma_1, f(0), \dots, \gamma_p, f(0))$.  

\interParagraphSpace 
Let $g:\R\to\R$ be a function of class $\C^{\infty}$ satisfying an algebro--differential equation as follows: there exists a polynomial $G\in\R[T]$ of positive degree with $g'=G(g)$ (here $T$ is a new indeterminate). Suppose that $G(g(0))\neq 0$ holds. Let $p\in\N$ and $A_1,\dots,A_p,B_1\dots,B_p$ indeterminates as before. 

With these notations and assumptions we have
\begin{lemma}
\label{lemma determinant g}
  $\text{det}(g(A_iB_j))_{1\leq i,j \leq p}\neq 0$ 
\end{lemma}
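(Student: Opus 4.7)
The plan is to work with the formal Taylor expansion of $g$ at the origin. Set $c_k := g^{(k)}(0)/k!$ and let $\tilde g(X) := \sum_{k \geq 0} c_k X^k \in \R[[X]]$. Since $g$ satisfies $g' = G(g)$ and $G$ is polynomial, passing to Taylor expansions at $0$ yields the formal identity $\tilde g' = G(\tilde g)$ in $\R[[X]]$. Substituting the Taylor series gives $g(A_i B_j) = \sum_{k \geq 0} c_k A_i^k B_j^k$ as an element of $\R[[A_i, B_j]]$, and it suffices to prove that the formal power series $\det(g(A_i B_j))_{1 \leq i, j \leq p}$ is nonzero in $\R[[A_1, \ldots, A_p, B_1, \ldots, B_p]]$; a fortiori, the corresponding $C^\infty$ function does not vanish identically.

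The first step --- and the main obstacle I anticipate --- is to prove that $c_k \neq 0$ for infinitely many $k$. Suppose toward contradiction that $\tilde g$ is a polynomial of degree $d \geq 0$. If $d = 0$, then $\tilde g$ is the constant $g(0)$, so the formal ODE gives $0 = \tilde g' = G(\tilde g) = G(g(0))$, contradicting the hypothesis $G(g(0)) \neq 0$. If $d \geq 1$, then $\deg \tilde g' = d - 1$, while $\deg G(\tilde g) = d \cdot \deg G \geq d$ since $G$ has positive degree --- again a contradiction. Hence infinitely many $c_k$ are nonzero, and we may select integers $k_1 < k_2 < \cdots < k_p$ with $c_{k_i} \neq 0$ for $1 \leq i \leq p$.

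The second step expands the determinant formally:
\[
\det\bigl(g(A_i B_j)\bigr)_{i,j}
= \sum_{\sigma \in S_p} \text{sgn}(\sigma) \prod_{i=1}^{p} \sum_{\ell_i \geq 0} c_{\ell_i} A_i^{\ell_i} B_{\sigma(i)}^{\ell_i}
= \sum_{(\ell_1, \ldots, \ell_p) \in \N^p} \Bigl( \prod_{i=1}^{p} c_{\ell_i} A_i^{\ell_i} \Bigr) \det\bigl(B_j^{\ell_i}\bigr)_{i,j}.
\]
The inner generalized Vandermonde $\det(B_j^{\ell_i})_{i,j}$ vanishes whenever two of the $\ell_i$ coincide and is a nonzero polynomial in the $B$'s otherwise. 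Extracting the coefficient of the monomial $A_1^{k_1} A_2^{k_2} \cdots A_p^{k_p}$ in the outer sum singles out the unique ordered tuple $(\ell_1, \ldots, \ell_p) = (k_1, \ldots, k_p)$, yielding $c_{k_1} c_{k_2} \cdots c_{k_p} \det(B_j^{k_i})_{i,j}$, a nonzero polynomial in the $B$'s. This forces $\det(g(A_i B_j))$ to be a nonzero element of $\R[[A, B]]$, completing the argument.
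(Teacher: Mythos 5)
Your proof is correct, but it takes a genuinely different route from the paper's. The paper argues by induction on $p$: it expands $\det(g(A_iB_j))$ along the first column into cofactors $M_i$ (nonzero by induction), differentiates repeatedly with respect to $B_1$ using the algebro--differential equation to generate the polynomials $P_0=T$, $P_{k+1}=P_k'G$ (so that $P_k(g)=g^{(k)}$), evaluates at $B_1=0$, and uses a generalized Vandermonde system to force all $M_i=0$. You avoid the induction and the differentiation of the determinant altogether by passing to the formal Taylor series at the origin and extracting the coefficient of $A_1^{k_1}\cdots A_p^{k_p}$ via the Cauchy--Binet-type expansion $\det(g(A_iB_j))=\sum_{(\ell_1,\dots,\ell_p)}\bigl(\prod_i c_{\ell_i}A_i^{\ell_i}\bigr)\det(B_j^{\ell_i})_{i,j}$. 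Underneath, both proofs rest on the same two facts: that generalized Vandermonde determinants with distinct exponents are nonzero, and that $g$ has infinitely many nonzero Taylor coefficients at $0$ --- your degree count on the formal identity $\tilde g'=G(\tilde g)$ is exactly the paper's claim that $P_{k'}(g(0))=g^{(k')}(0)\neq 0$ for arbitrarily large $k'$, which the paper instead derives by the factorization $P_k=Q_k(T-g(0))^{m_k}$ and the computation $P_{k+m_k}(g(0))=Q_k(g(0))G(g(0))^{m_k}$. What your version buys: it is shorter, isolates the only property of $g$ actually used (at least $p$ nonzero Taylor coefficients at a single point), and hence subsumes the paper's Lemma~\ref{lemma determinant} for $\sin$ as the special case of odd exponents; what the paper's version buys is that it stays within elementary calculus of real functions, never invoking formal power series or the validity of termwise rearrangement in $\R[[A,B]]$ (which in your argument is justified because each $A$-monomial receives a contribution from exactly one tuple $(\ell_1,\dots,\ell_p)$, a point worth stating explicitly).
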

\begin{proof}
By induction on $p$. For $p=1$ there is nothing to prove, because $g'(0)=G(g(0))\neq 0$ implies $g\neq 0$.

Developing the determinant $\text{det}(g(A_iB_j))_{1\leq i,j \leq p}$ in the first row we obtain polynomials $M_1,\dots,M_p$ in $g(A_iB_j),1\leq i \leq p, 1\leq j \leq p$ such that 
\[
\text{det}(g(A_iB_j))_{1\leq i,j \leq p} = \sum_{1\leq i \leq p} g(A_iB_1)M_i 
\]
holds. 

Since $(-1)^{i+1}M_i$ is a cofactor of the matrix $g(A_iB_j)_{1\leq i,j \leq p}$ we may apply our inductive hypothesis and conclude $M_1\neq 0,\dots,M_p\neq 0$. For $k\in\Z_{\geq 0}$ let $P_k\in\R[T]$ be recursively defined as $P_0:=T$ and $P_{k+1}:=P_k'G$. Then we have $\text{deg}P_{k+1} = \text{deg}P_k'+\text{deg}G \geq \text{deg}P_k$ and therefore $P_k$   is of positive degree for all $k\in\N$. In particular we conclude $P_k \neq 0$ for any $k\in\Z_{\geq 0}$. Observe that 
\begin{align*} 
\frac{\partial}{\partial B_1} \sum_{1\leq i \leq p} P_k(g(A_iB_1))A_i^k M_i & = \sum_{1\leq i \leq p} P_k'(g(A_iB_1))g'(A_iB_1))A_i^{k+1} M_i = \\ 
\sum_{1\leq i \leq p} (P_k'\cdot G)(g(A_i\cdot B_1)) A_i^{k+1}M_i &  = \sum_{1\leq i \leq p} P_{k+1}(g(A_i\cdot B_1))A_i^{k+1}M_i
\end{align*}
and 
\[
\text{det}g(A_i\cdot B_j)_{1\leq i \leq j} = \sum_{1\leq i \leq p} g(A_iB_1)M_i = \sum_{1\leq i \leq p} P_0(g(A_iB_j))\cdot A_i^0 M_i 
\]
holds. This implies 
\begin{equation}
\label{estrella}
\frac{\partial^k}{\partial B_1^k} \text{det}(g(A_i\cdot B_j))_{1\leq i,j \leq p} = \sum_{1\leq i \leq p} P_k(g(A_i\cdot B_1)) A_i^k M_i
\end{equation}

Since $P_k\neq 0$ holds we can write $P_k = Q_k (T - g(0))^{m_k}$ for some polynomial $Q_k\in\R[T]$ with $Q_k(g(0))\neq 0$ and $m_k\in\Z_{\geq 0}$. Observe that $P_{k+m_k}$ can be written as $P_{k+m_k}= S(T) (T-g(0))+ Q_k G^{m_k}$, where $S(T) \in\R[T]$ is a suitable polynomial. Therefore we have $P_{k+m_k}(g(0))\neq 0$. This implies that for any $k\in\Z_{\geq 0}$ there exists $k'\geq k$ with $P_{k'}(g(0))\neq 0$. Now we may chose integers $0 \leq k_1 < \dots < k_p$ such that $P_{k_j}(g(0))\neq 0$ holds for $1\leq j \leq p$.

Assume now that  $\text{det}(g(A_iB_j))_{1\leq i,j \leq p}$ vanishes identically. Then \eqref{estrella} implies 
\[
\sum_{1\leq i \leq p } P_k (g(A_iB_1)) A_i^k M_i = 0
\]
for any $k\in\Z$ and in particular 
\[
\sum_{1\leq i \leq p } P_{k_j} (g(A_iB_1)) A_i^{k_j} M_i = 0 \text{\ \ and\ } \sum_{1\leq i \leq p } P_{k_j} (g(0)) A_i^{k_j} M_i = 0
\]
for any $1\leq j \leq p$. 

Observing $\text{det}(P_{k_j}(0)A_i^{k_j})_{1\leq i,j \leq p}\neq 0 $ we conclude $M_1=0,\dots,M_p=0$ which is a contradiction.  
\end{proof} 

\begin{lemma}
\label{lemma determinant}
  Let notations be as in the previous lemma. Then we have
\[
\text{det}(\text{sin}A_iB_j)_{1\leq i,j \leq p} \neq 0.
\]
\end{lemma}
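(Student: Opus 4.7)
The plan is to mimic closely the proof of Lemma \ref{lemma determinant g}. The subtlety is that $\sin$ does not satisfy a first-order algebraic differential equation $g'=G(g)$ with $G\in\R[T]$, so we cannot invoke that lemma as a black box. However, inspecting its proof one sees that the polynomials $P_k$ enter only through the evaluations $P_k(g(0))=g^{(k)}(0)$, and what is really used is that $g^{(k)}(0)\neq 0$ for infinitely many $k$. For $g=\sin$ this holds transparently, since $\sin^{(2j-1)}(0)=(-1)^{j-1}\neq 0$ for every $j\geq 1$.

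Accordingly, I would argue by induction on $p$. The case $p=1$ is immediate, so assume the statement for $p-1$ and set $D:=\det(\sin(A_iB_j))_{1\leq i,j\leq p}$. Expanding $D$ along the first column gives
\[
D = \sum_{1\leq i\leq p}\sin(A_iB_1)\,N_i,
\]
where $N_i$ is $(-1)^{i+1}$ times the $(i,1)$-minor. Each $N_i$ is independent of $B_1$, and by the inductive hypothesis (applied to the $(p-1)\times(p-1)$ matrix obtained by deleting row $i$ and column $1$, after relabeling) it is nonzero.

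Next, using $\frac{\partial^k}{\partial B_1^k}\sin(A_iB_1)=A_i^k\sin^{(k)}(A_iB_1)$ and differentiating $D$ repeatedly in $B_1$, one finds
\[
\frac{\partial^k D}{\partial B_1^k}\bigg|_{B_1=0} = \sin^{(k)}(0)\sum_{1\leq i\leq p} A_i^k N_i.
\]
Assuming for contradiction that $D\equiv 0$, all these expressions vanish. Specialising $k=2j-1$ for $j=1,\dots,p$, where $\sin^{(2j-1)}(0)=\pm 1\neq 0$, yields the linear system $\sum_i A_i^{2j-1}N_i=0$ for $j=1,\dots,p$. The coefficient matrix $(A_i^{2j-1})_{i,j}$ factors as $\prod_i A_i$ times a Vandermonde in the squares $A_i^2$, so its determinant equals $\prod_i A_i\cdot\prod_{i<k}(A_k^2-A_i^2)$, which is a nonzero polynomial in $A_1,\dots,A_p$. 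Hence the system forces $N_1=\cdots=N_p=0$, contradicting the inductive hypothesis. The step requiring some care, exactly as in Lemma \ref{lemma determinant g}, is to be explicit about the sense in which the $N_i$ are nonzero (namely, as nonvanishing analytic expressions in the remaining $A_k,B_l$) so that this really does contradict the induction; this is handled by the same bookkeeping already used there.
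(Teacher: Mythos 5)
Your proof is correct and follows essentially the same route as the paper's: induction on $p$, expansion along the first column with the cofactors nonzero by the inductive hypothesis, repeated differentiation in $B_1$ evaluated at $B_1=0$ to extract the relations $\sum_i A_i^{2j-1}N_i=0$, and the generalized Vandermonde determinant $\prod_i A_i\cdot\prod_{i<k}(A_k^2-A_i^2)\neq 0$ to force $N_1=\dots=N_p=0$. Your opening remark correctly identifies why $\sin$ needs a separate argument rather than an appeal to the preceding lemma, and your version is if anything more explicit than the paper's about the Vandermonde step.
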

\begin{proof}
Again by induction in $p$. The case $p=1$ is obvious. In order to treat the case $p>1$ we develop the determinant $\text{det}(\text{sin}A_iB_j)_{1\leq i,j \leq p}$ as before obtaining polynomials $M_1,\dots,M_p$ in $\text{sin}(A_iB_j)$, $1\leq i \leq p$, $1\leq j \leq p$ such that
\[
\text{det}(\text{sin}A_iB_j)_{1\leq i,j \leq p} = \sum_{1\leq i \leq p} \text{sin}(A_iB_1)M_i
\] 
holds. For $1\leq i \leq p$ the expression $(-1)^{i+1}M_i$ is a cofactor of $(\text{sin} A_iB_j)_{1\leq i,j \leq p}$ and we can conclude inductively $M_1\neq 0,\dots, M_p\neq 0$.

We assume $\text{det}(\text{sin}A_iB_j)_{1\leq i,j \leq p} = 0$ and derive iteratively the identity
\[
\frac{\partial^k}{\partial B_1^k} \sum_{1\leq i \leq p} \text{sin} (A_i B_1) M_i = 0
\]
with respect to $B_1$. This yields for $k\in\Z_{\geq 0}$ identities 
\[
\sum_{1\leq i \leq p} \text{cos} (A_i B_1) A_i^{2k+1}M_i = 0
\]
and therefore 
\[
\sum_{1\leq i \leq p} A_i^{2k+1} M_i =0.
\]
Similarly as before we conclude $M_1=0,\dots,M_p=0$, a contradiction.
\end{proof} 

Taking into account the well--known first order differential equations for the logistic and tanh functions, we may summarize the outcome of Lemma \ref{lemma perfect algorithm}, \ref{lemma determinant g}, \ref{lemma determinant} by the following statement which demonstrates the unfeasibility of continuously differentiable perfect learning algorithms.

\begin{theorem}
  \label{theorem 1}
  For the usual activation functions logistic, tanh and sinusoid there do not exist differentiable perfect learning algorithms able to learn for any neural network architecture with at least one hidden layer any training data of length exceeding the number of parameters. 
\end{theorem}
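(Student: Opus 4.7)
The plan is to obtain Theorem \ref{theorem 1} as an immediate corollary of Lemma \ref{lemma perfect algorithm}, by verifying the two hypotheses of that lemma -- namely $f'(0)\neq 0$ and the nonvanishing of the generic determinant $\det(g_1(A_iB_j))_{1\leq i,j\leq p}$ -- for each of the three standard activation functions. Once this is done, Lemma \ref{lemma perfect algorithm} yields the nonexistence of a continuously differentiable perfect learning algorithm exactly when $p>m(n+2)+1$, which is the stated condition that the training data length exceed the parameter count of the three-layer architecture $O_{V,S,W,T}$.

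The first hypothesis is a one-line computation: $\sigma'(0)=1/4$ for the logistic function $\sigma$, $\tanh'(0)=1$, and $\sin'(0)=\cos(0)=1$ are all nonzero, so each of the three activations is admissible as $f$.

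For the determinant hypothesis I propose to split by the choice of the hidden-layer activation $g_1$. When $g_1=\sin$, Lemma \ref{lemma determinant} applies directly and there is nothing more to check. When $g_1$ is the logistic function or $\tanh$, I would invoke Lemma \ref{lemma determinant g} after recording the well-known first-order algebro-differential equations these functions satisfy: the logistic function obeys $\sigma'=\sigma(1-\sigma)$, so one takes $G(T)=T(1-T)$ of positive degree with $G(\sigma(0))=G(1/2)=1/4\neq 0$; the hyperbolic tangent obeys $\tanh'=1-\tanh^2$, so one takes $G(T)=1-T^2$ of positive degree with $G(\tanh(0))=G(0)=1\neq 0$. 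Both functions are manifestly of class $C^{\infty}$, so the hypotheses of Lemma \ref{lemma determinant g} are met and that lemma hands me the nonvanishing of the generic determinant in both cases.

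With these inputs in place, Lemma \ref{lemma perfect algorithm} closes the argument for every admissible pairing of $f$ and $g_1$ drawn from logistic, $\tanh$ and sinusoid. Since Lemmas \ref{lemma perfect algorithm}, \ref{lemma determinant g} and \ref{lemma determinant} carry all the substantive content, I expect no real obstacle: the theorem is genuinely a verification step, and the only thing one has to remember is the elementary ODE satisfied by each of the three activations and the nonvanishing of the corresponding polynomial $G$ at the point $g(0)$.
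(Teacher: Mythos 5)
Your proposal is correct and matches the paper's own derivation: the paper obtains Theorem \ref{theorem 1} precisely by combining Lemma \ref{lemma perfect algorithm} with Lemma \ref{lemma determinant g} (via $\sigma'=\sigma(1-\sigma)$, $\tanh'=1-\tanh^2$ and the nonvanishing of $G$ at $g(0)$) and Lemma \ref{lemma determinant} for the sinusoid. Your explicit checks of $f'(0)\neq 0$ and of the hypotheses $G(g(0))\neq 0$ are exactly the verification the paper leaves implicit.
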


Let us observe that we do not dispose over a general algorithmic model able to capture in its whole extent the notion of learning algorithm. Hence, in view of Theorem \ref{theorem 1}, we are only able to state a kind of metaconjecture, namely that general perfect learning algorithms do not exist. The practical counterpart of this can also be found by practical experience. In particular there is no option to improve backpropagation to a perfect learning algorithm. This argument may be reinforced experimentally as follows. 

Let $p>>m(n+2)+1$ and choose $v,s,w,t$ and $\gamma_1,\dots,\gamma_p$ at random (so $\gamma_1,\dots,\gamma_p$ are all distinct). Compute numerical representations for $\zeta_1 := o_{v,s,w,t}(\gamma_1),\dots, \zeta_p := o_{v,s,w,t}(\gamma_p)$ and apply backpropagation to this representation of the training data set $((\gamma_1,\zeta_1),\dots,(\gamma_p,\zeta_p))$. 

The algorithm returns a numerical representation of a parameter vector $(v',s',w',t')$ and an error $E(v',s',w',t')$. We may expect that 
\[
E(v',s',w',t')>>0
\]
holds. This situation is made possible by local minima. In consequence the usual justification of backpropagation with reference to global minima is incomplete. Therefore the real fundamentation of backpropagation is exclusively based on practical evidence, not on theory.

Finally let us state that mutatis mutandis Theorem \ref{theorem 1} is also true for neural network architectures over the complex numbers with polynomial activation functions. This can easily be seen combining the arguments of the proof of \cite[Theorem 18]{BHM16} with the arguments of the proof of Lemma \ref{lemma perfect algorithm}.

\section{Outlook}

In textbooks backpropagation becomes usually motivated as an attempt to solve by a simple
algorithm a particular global minimization problem. Theorem \ref{theorem 1} expresses under moderate
differentiability conditions the unfeasability of this purpose.

In case of neural networks with activation functions which are polynomials over the reals (or,
more generally, semialgebraic functions), efficient real quantifier elimination procedures may
be applied in order to solve the corresponding global minimization problem (see \cite{HRS90}, \cite{R} and in
particular \cite[14.2]{BPR}). This way to proceed leads to complexity upper bounds which are singly
exponental in the number of parameters of the neural network architecture under
consideration. Nevertheless, in this general setting we cannot expect more efficient worst case
complexity bounds.

On the other hand, the activation functions logistic, tanh and sin are pfaffian. Hence, for
learning purposes, it makes sense to consider more generally neural network architectures
with arbitrary pfaffian activation functions. In order to proceed in an analogous way as in the
polynomial or semialgebraic case for the solution of the underliying global minimization
problem, the subpfaffian set up is the most suitable one. The final complexity outcome is
similar as in the polynomial or semialgebraic case (see \cite{GV01} and the survey \cite{GV04}).

\bibliographystyle{alphaabbr}
\bibliography{references}

\end{document}